%% file: root.tex
\documentclass[letterpaper, 10 pt, conference]{ieeeconf} %

\IEEEoverridecommandlockouts                              %
\usepackage{amsmath,amssymb,amsfonts}

\newtheorem{problem}{Problem}
\newtheorem{lemma}{Lemma}
\usepackage{bm}
\usepackage{graphicx}
\usepackage{xcolor}
\usepackage{url}
\usepackage[colorlinks=true,linkcolor=black,citecolor=black,urlcolor=blue]{hyperref}
\usepackage{subfig}
\usepackage{cite}

\input{math_commands.tex}

\renewcommand{\baselinestretch}{.97}

\title{\LARGE \bf
Adapting to Changes in Agent Behavior via Finite-Depth Policy Sensitivity\vspace{-0.2cm}
}

\author{Lan Shi, Daigo Shishika, and Xuan Wang%
\thanks{The authors are with George Mason University, USA.}
}

\begin{document}

\maketitle
\thispagestyle{empty}
\pagestyle{empty}

\begin{abstract}

Adapting a reinforcement learning policy to changes in another agent's behavior typically requires a large amount of new interaction data. 
Policy sensitivity provides a first-order prediction of how a locally optimal policy changes with a behavioral parameter, but its computation requires second-order derivatives whose effects propagate across future interactions.
We develop a finite-depth framework to estimate this sensitivity by
approximating the policy Hessian and mixed derivative using information
from a reference environment. 
The method features an adjustable propagation depth which determines where derivative propagation along the trajectory is truncated.
We characterize the derivative contributions omitted by finite-depth propagation and derive truncation-error bounds for the approximated derivatives and resulting policy sensitivity. The bounds are nonincreasing with propagation depth and vanish at full-horizon propagation.
Using a belief-driven pursuit--evasion game as a validation scenario, 
the proposed method generally achieves lower derivative-estimation errors as the propagation depth increases and outperforms the baseline methods in both estimation accuracy and policy adaptation.
The sensitivity-based initialization improves zero-shot return over direct transfer, and also shows advantages for the subsequent fine-tuning in the target environment.
\end{abstract}

\vspace{-2pt}
\section{Introduction}
\vspace{-2pt}

Reinforcement-learning policies deployed in interactive environments often need to operate with agents exhibiting different behaviors~\cite{lin2020model}. For example, an autonomous vehicle may encounter drivers with different levels of aggressiveness, while a robotic agent may interact with teammates or opponents following different decision strategies. Because the behavior of an interactive agent may be unknown before deployment, a policy optimized for one behavior may not perform well against another. On the other hand, retraining a policy from scratch for every newly encountered agent can be costly.
One approach to efficient policy adaptation is sensitivity-based adaptation, which characterizes how a locally optimal policy changes with respect to a problem parameter. The resulting sensitivity can be used in a first-order prediction to adapt a previously learned policy to a nearby target environment without learning from scratch. However, computing this sensitivity generally requires higher-order derivatives of the expected payoff.

In our setting, rather than directly parameterizing physical dynamics or reward~\cite{633827,barreto2017successor}, the environmental variation is induced by a change in another decision-making agent's policy.
The behavioral parameter therefore affects the transition dynamics experienced by the learning agent through the interactive agent's decisions. Because this effect propagates through subsequent states and the coupled decisions of both agents, a change in the interactive agent can influence the entire future trajectory.
Consequently, estimating these
effects from full trajectories can require substantial data, while
propagating them through a transition model over the full horizon can
be computationally expensive.
This motivates the problem studied in this paper: how to efficiently estimate policy sensitivity to changes in an interactive agent's behavior while providing theoretical guarantees on the resulting approximation. Figure~\ref{fig:intro} illustrates the interaction structure. %
\begin{figure}[t]
    \centering
    \includegraphics[width=0.8\linewidth]{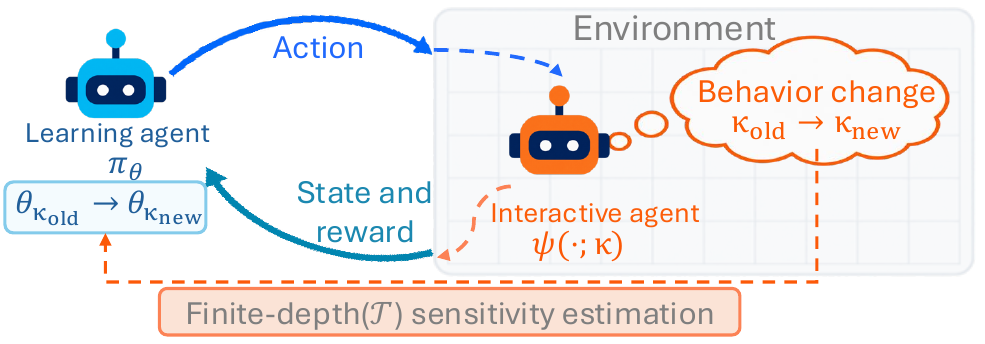}
    \caption{Adaptation to changes in an interactive agent's behavior by finite-depth policy sensitivity estimation.}
    \label{fig:intro}
    \vspace{-1.8em}
\end{figure}

\textit{{Literature review}}:
Existing approaches to policy adaptation under environmental variation include robust reinforcement learning and domain randomization, which train a common policy across a prescribed range of environments or disturbances \cite{pinto2017robust,calafiore2006scenario,zhang2020policy},
and meta-reinforcement learning, which learns an adaptation mechanism from a distribution of related tasks \cite{finn2017model,muthirayan2025meta}.
These approaches rely on training in multiple environments or task variations. In contrast, we consider a policy optimized for a single reference environment and how to adapt it to a particular change in the interactive agent.

Our formulation is closely related to sensitivity analysis and implicit differentiation for parameterized optimization \cite{zavala2009advanced,ratliff2016characterization}.
Differentiable optimization and bilevel-learning methods characterize how an optimizer varies with problem parameters by differentiating its optimality conditions \cite{agrawal2019differentiable,gould2016differentiating,gros2019data}.
Related ideas have also been studied in reinforcement learning.
For example, sensitivity arguments have been used to analyze policy-gradient methods \cite{633827,wu2022understanding}, while implicit differentiation has been applied in model-based reinforcement learning to differentiate Bellman-optimal value functions with respect to model parameters \cite{nikishin2022control}.
Differentiation-based ideas also appear in meta-gradient reinforcement learning, where meta-parameters for training are optimized according to their effect on subsequent performance \cite{xu2018meta,xu2020meta}.
In our setting, implicit differentiation characterizes how a locally optimal policy changes with the behavioral parameter of the interactive agent.

Computing this policy sensitivity requires both the policy Hessian and the mixed derivative between the learner's policy parameter and the interactive agent's behavioral parameter.
Prior work in reinforcement learning has primarily studied the computation or approximation of policy Hessians for policy optimization \cite{kordabad2022quasi}.
Policy Hessians can be expressed in terms of trajectory-likelihood derivatives and estimated from sampled trajectories~\cite{marbach2001simulation}. 
Higher-order score-function estimators, including DiCE-type methods, provide another trajectory-based route to estimating higher-order derivatives~\cite{farquhar2019loaded}.
The work~\cite{shen2019hessian} also shows that Hessian--vector products can be estimated from trajectories without explicitly forming the full matrix.
Alternatively, when a differentiable transition model is available, second-order derivatives can be obtained by differentiating through the model and propagating derivative information along the state-transition dynamics~\cite{gros2019data}.
These approaches provide two complementary routes: trajectory-based estimation
avoids full model propagation but can require many samples, whereas model-based differentiation propagates derivative information through the model but may require propagation over the full horizon. This motivates a finite-depth approximation that combines reference-environment sampling with local model-based derivative propagation. While temporal truncation has also been used in approximate dynamic programming to reduce planning-horizon computation~\cite{farahmand2016truncated}, our method retains the boundary value while truncating its parameter-derivative propagation.

 \textit{\textbf{Contributions}} of this paper include:
(i) \textit{Finite-depth sensitivity estimation.}
By combining state sampling with local state-transition propagation, we develop a Bellman-based derivative decomposition and a finite-depth approximation of the policy Hessian and mixed derivative. An adjustable derivative-propagation depth $\mathcal{T}$ controls how far parameter-derivative effects are propagated through future interactions.
(ii) \textit{Truncation-error guarantees.}
We characterize the derivative contributions omitted by finite-depth
propagation and derive error bounds for the approximated derivatives
and resulting policy sensitivity. The bounds quantify the
depth--accuracy tradeoff. They are nonincreasing with propagation depth,
and vanish at full-horizon propagation.
(iii) \textit{Validation.}
We evaluate the proposed framework in a representative example (a pursuit-evasion game with a behavior-parameterized defender) through numerical studies, including sensitivity-estimation accuracy and adaptation performance.

\section{Problem Formulation}\label{sec:problem}

\subsection{Parameterized Interactive-Agent MDP}\label{sub:setting}
We consider a family of environments with two interactive agents: a learning agent, denoted by $\mathsf L$, and an interactive agent, denoted by $\mathsf O$. The learning agent follows a policy $\pi_\theta(\cdot\mid s)$, parameterized by $\theta\in\mathbb{R}^{d_\theta}$, where $\theta$ contains the trainable policy parameters. 
The interactive agent follows a policy $\psi(\cdot;\kappa)$ characterized by a behavioral parameter $\kappa\in\mathbb{R}^{d_\kappa}$. Different values of $\kappa$ represent different behavioral characteristics of the interactive agent, such as aggressiveness or action preference. Changes in $\kappa$ alter the behavior of the interactive agent and, consequently, the transition dynamics experienced by the learning agent. Accordingly, each fixed value of $\kappa$ defines an environment in this family, and a change in $\kappa$ may require the learning agent to adapt its policy.

We model the environment experienced by the learning agent as a finite-horizon Markov decision process indexed by $\kappa$,
$
\mathcal{M}(\kappa)=\left(\mathcal{S}, \mathcal{A}_\mathsf L, P_{\kappa}, r, \rho_0, \gamma,  H\right),
$
where $\mathcal{S}$ and $\mathcal{A}_\mathsf L$ are the state space and the learning agent's action space, respectively; $P_\kappa(\cdot\mid s,a_{\mathsf L})$ is the one-step transition kernel experienced by the learning agent; $r:\mathcal{S}\times\mathcal{A}_\mathsf L\times\mathcal{S}\to\mathbb{R}$ is the transition reward, evaluated on $(s,a_{\mathsf{L}},s')$; $\rho_0$ is the initial-state distribution; $\gamma \in (0,1]$ is the discount factor; and $H$ is the maximum episode length. Across this family of environments, only the behavior of the interactive agent varies with $\kappa$.  
The initial-state distribution, reward function, physical dynamics, and termination rule are independent of $\kappa$. 

To relate $P_\kappa$ to the interactive agent's behavior, let $a_{\mathsf L}\in\mathcal A_{\mathsf L}$ and $a_{\mathsf O}\in\mathcal A_{\mathsf O}$ denote the agents' actions, with both action spaces finite. After the learning agent selects $a_{\mathsf L}$, an intermediate state $\tilde s\in\mathcal{S}$ is generated according to $\tilde{s}\sim P^{\mathsf L}(\cdot\mid s,a_{\mathsf L}).$ The interactive agent then selects $a_{\mathsf O}\sim\psi(\cdot\mid\tilde{s};\kappa),$ after which the next state is generated according to $s'\sim P^{\mathsf O}(\cdot\mid\tilde{s},a_{\mathsf O}).$ 
This gives

{\small
\begin{equation}
\begin{aligned}
P_\kappa(ds'\mid s,a_{\mathsf L})
=&
\int_{\mathcal S}
\sum_{a_{\mathsf O}\in\mathcal A_{\mathsf O}}
\psi(a_{\mathsf O}\mid\tilde{s};\kappa)
\\
&\cdot
P^{\mathsf O}(ds'\mid\tilde{s},a_{\mathsf O})
P^{\mathsf L}(d\tilde{s}\mid s,a_{\mathsf L}).
\end{aligned}
\label{eq:induced-transition}
\end{equation}
}

\noindent Here,~\eqref{eq:induced-transition} allows the interaction to be represented as a parameterized MDP from the learning agent's perspective. Importantly, however, $\kappa$ does not directly parameterize the physical transition dynamics. Instead, it changes the decision policy of the interactive agent, and the resulting kernel $P_\kappa$ is induced by marginalizing that agent's action. Consequently, a perturbation in $\kappa$ affects not only the immediate transition but also subsequent state distributions and the 
learner's future decisions through repeated interaction.

Let 
$\xi=(s_0,a_{\mathsf L,0},\tilde s_0,a_{\mathsf O,0},
s_1,\ldots,
a_{\mathsf L,H-1},\tilde s_{H-1},a_{\mathsf O,H-1},s_H)$
denote an interaction trajectory, and let $\mathbb P_{\theta,\kappa}$ denote its distribution induced by
$\rho_0$, $\pi_\theta$, $P^{\mathsf L}$, $\psi(\cdot;\kappa)$, and $P^{\mathsf O}$. Let $r_t=r(s_t,a_{\mathsf L,t},s_{t+1})$ denote the reward at stage $t$. Then,
\begin{equation}
\label{eq:objective}
J(\theta ; \kappa)=\mathbb E_{{{\xi\sim\mathbb P_{\theta,\kappa}}}}\left[\sum_{t=0}^{H-1} \gamma^t r_t\right],
\end{equation}

To study policy adaptation within the family $\mathcal M(\kappa)$, we assume that a locally optimal policy has been obtained in a reference environment $\mathcal M(\kappa_{\rm old})$ and that the following information is available.

\begin{assumption}[Available Information and Regularity]\label{as:info-available}
In the reference environment $\mathcal{M}(\kappa_{\mathrm{old}})$, we assume:\\
(i) access to the locally optimal policy parameter $\theta^\star(\kappa_{\mathrm{old}})$ and sampled interactive trajectories generated in the reference environment under $\pi_{\theta^\star(\kappa_{\mathrm{old}})}$;\\
(ii) $P_\kappa(\cdot\mid s,a_{\mathsf L})$ is %
differentiable in $\kappa$ %
in a neighborhood of $\kappa_{\mathrm{old}}$.
The transition kernel $P_{\kappa_{\mathrm{old}}}(\cdot\mid s,a_{\mathsf L})$
is available,
and 
$\left.
\nabla_\kappa P_\kappa(\cdot\mid s,a_{\mathsf L})
\right|_{\kappa=\kappa_{\mathrm{old}}}$ is computable;\\
(iii) $\pi_\theta(a_{\mathsf L}\mid s)$ is twice continuously
differentiable with respect to $\theta$ in a neighborhood of
$\theta^\star(\kappa_{\mathrm{old}})$, and its first- and second-order
derivatives can be evaluated at
$\theta=\theta^\star(\kappa_{\mathrm{old}})$.\\
(iv) $\nabla_\theta J(\theta;\kappa)$ is continuously differentiable in a neighborhood of $\bigl(\theta^\star(\kappa_{\mathrm{old}}),\kappa_{\mathrm{old}}\bigr)$. Moreover,
$\nabla_\theta J
\bigl(\theta^\star(\kappa_{\mathrm{old}});
\kappa_{\mathrm{old}}\bigr)
=0$,
$\nabla_{\theta\theta}^2 J
\bigl(\theta^\star(\kappa_{\mathrm{old}});
\kappa_{\mathrm{old}}\bigr)
\prec0$.\\
\end{assumption}

\smallskip
Here, (i) specifies the reference information available for adaptation, which can be obtained from training and operating the reference policy; (ii) specifies the available transition-model information and its local regularity, which is available when the interaction between the learning and interactive agents is known from system design or can be identified from data;
and (iii)-(iv) are standard local differentiability and optimality conditions~\cite{gould2016differentiating}. %

\subsection{Sensitivity-based Adaptation}\label{sub:application}
Consider a change in the behavioral parameter: $\kappa_{\mathrm{new}}=\kappa_{\mathrm{old}}+\Delta\kappa$, %
where $\kappa_{\mathrm{new}}$ and $\kappa_{\mathrm{old}}$ are known or can be identified~\cite{9147924}. We refer to $\mathcal{M}(\kappa_{\mathrm{new}})$ as the
\emph{target environment}.
The reference policy parameter $\theta^\star(\kappa_{\mathrm{old}})$ locally optimal for $\mathcal M(\kappa_{\mathrm{old}})$ may no longer be locally optimal in the target environment. We use reference-environment information to construct an initialization for adaptation in the target environment. %
This is based on local sensitivity analysis to characterize how the locally optimal policy parameter $\theta^\star(\kappa)$ changes as $\kappa$ varies. We call this $\partial\theta^\star(\kappa)/\partial\kappa$ the policy-parameter sensitivity (policy sensitivity for short). 

Under Assumption~\ref{as:info-available}, the implicit function theorem
\cite[Theorem~9.28]{rudin1976principles} gives a locally unique,
continuously differentiable stationary branch
$\theta^\star(\kappa)$ %
satisfying %
\begin{equation}\label{eq:stationarity}
    \nabla_\theta J
    \left(\theta^\star(\kappa);\kappa\right)=0.
\end{equation}
Differentiating \eqref{eq:stationarity} with respect to $\kappa$ and evaluating at $\kappa=\kappa_{\mathrm{old}}$ yields
\begin{equation}\label{eq:sensitivity}
\mathbf H_{\mathrm{old}}\mathbf{S}_{\mathrm {old }}+\mathbf M_{\mathrm{old}}=0
\end{equation}
where the Hessian $\mathbf H_{\mathrm{old}} :=\nabla_{\theta\theta}^2J\bigl(\theta^\star(\kappa_{\mathrm{old}});\kappa_{\mathrm{old}}\bigr)\in\mathbb{R}^{d_\theta \times d_\theta}$ 
characterizes the local curvature of the learner's objective.
The mixed derivative $\mathbf M_{\mathrm{old}}:=\nabla_{\theta\kappa}^2J\bigl(\theta^\star(\kappa_{\mathrm{old}});\kappa_{\mathrm{old}}\bigr)\in\mathbb{R}^{d_\theta \times d_\kappa}$ characterizes how the learner's policy gradient changes with the behavioral parameter of the interactive agent.
The policy sensitivity is $\mathbf{S}_{\mathrm {old }}:=\partial \theta^\star(\kappa) /\left.\partial \kappa\right|_{\kappa=\kappa_\mathrm {old}}\in\mathbb{R}^{d_\theta \times d_\kappa}$.

For sufficiently small $\|\Delta\kappa\|_2$, differentiability of the local solution branch gives
{\small
\begin{equation}\label{eq_init}
\theta^\star\left(\kappa_{\mathrm {new }}\right)\approx \theta^\star\left(\kappa_{\mathrm {old }}\right) +%
\mathbf{S}_{\mathrm {old }}\Delta \kappa.%
\end{equation}
}Thus, the sensitivity at $\kappa_{\mathrm{old}}$ determines the first-order correction to the reference policy parameter under the change in the interactive agent. Based on this, \eqref{eq_init} can be used as a natural initialization for adaptation in $\mathcal M(\kappa_{\mathrm{new}})$. 
The remaining problem is therefore to estimate $\mathbf{S}_{\mathrm {old }}$, or equivalently the quantities $\mathbf H_{\mathrm{old}}$ and $\mathbf M_{\mathrm{old}}$ in \eqref{eq:sensitivity}, using the reference-environment information available in Section~\ref{sub:setting}.

\subsection{Problem of Interest}
\begin{problem}[Computing Policy Sensitivity for Adaptation]\label{prob}
    Given the reference policy parameter $\theta^\star(\kappa_\mathrm{old})$ and the information specified in Assumption~\ref{as:info-available}, \\
    (i) construct approximations $\widehat{\mathbf H}_{\mathrm{old}}$ and $\widehat{\mathbf M}_{\mathrm{old}}$, and use them to estimate policy sensitivity
    $\mathbf S_{\mathrm{old}}=-\mathbf H_{\mathrm{old}}^{-1}\mathbf M_{\mathrm{old}}.$\\
    (ii) characterize the approximation errors in $\widehat{\mathbf H}_{\mathrm{old}}$, $\widehat{\mathbf M}_{\mathrm{old}}$, and the resulting sensitivity estimate $\widehat{\mathbf S}_{\mathrm{old}}$.

\end{problem}

\begin{remark}[Motivation for Finite-Depth Approximation]
The derivatives $\mathbf H_{\mathrm{old}}$ and $\mathbf M_{\mathrm{old}}$ can be estimated from full sampled trajectories or computed by propagating model-based derivative information over the full horizon. 
However, the former can require substantial trajectory data, whereas the latter incurs full-horizon model propagation. 
In Section~\ref{sec:method}, we introduce an approach that provides a middle ground by combining these two alternatives.
Our method combines reference-environment sampling with model-based derivative propagation truncated to an adjustable propagation depth $\mathcal T$.
The resulting truncation errors in $\mathbf H_{\mathrm{old}}$, $\mathbf M_{\mathrm{old}}$, and $\mathbf S_{\mathrm{old}}$ are analyzed and quantified in Section~\ref{sec:sensitivity-error}. \hfill$\square$
\end{remark}

\section{Method}\label{sec:method}
To approximate $\mathbf H_{\mathrm{old}}$ and $\mathbf M_{\mathrm{old}}$, we first derive an exact Bellman decomposition that separates derivative contributions within a chosen propagation depth from those carried by the remaining-horizon boundary terms. We then construct a finite-depth approximation by fixing the boundary value at stage $\mathcal T$ and truncating its parameter derivatives.

\subsection{Bellman Derivative Decomposition}
\label{sub:finite-depth}
Let $V_t^{\theta,\kappa}(s):=\mathbb E\bigl[\sum_{\ell=t}^{H-1}\gamma^{\ell-t}r_\ell \,\big|\, s_t=s\bigr]$ denote the value function at stage $t$, where the expectation is over trajectories generated by $\pi_\theta$ under $P_\kappa$. Let $Q_t^{\theta,\kappa}(s,a_{\mathsf L})$ denote the corresponding expected return conditional on $s_t=s$ and $a_{\mathsf L,t}=a_{\mathsf L}$. %
With the terminal condition $V_H^{\theta,\kappa}(s)\equiv0$, these values satisfy the following recursions for $t=0,\dots,H-1$:

\vspace{-1em}
\begin{subequations}\label{eq:method-bellman}
{\small\begin{align}\label{eq:method-bellman-a}
Q_t^{\theta, \kappa}\left(s, a_{\mathsf{L}}\right)
&= \int_{\mathcal S} P_\kappa\left(d s^{\prime} \mid s, a_{\mathsf{L}}\right) \left[r\left(s, a_{\mathsf{L}}, s^{\prime}\right)+\gamma V_{t+1}^{\theta, \kappa}\left(s^{\prime}\right)\right] \\ \label{eq:method-bellman-b}
V_t^{\theta, \kappa}(s)&= \sum_{a_{\mathsf{L}} \in \mathcal{A}_{\mathsf{L}}} \pi_\theta\left(a_{\mathsf{L}} \mid s\right) Q_t^{\theta, \kappa}\left(s, a_{\mathsf{L}}\right) .
\end{align}}
\end{subequations}
Since $P_\kappa$ is independent of $\theta$, one has
{\small
\begin{equation}\label{eq:Q-theta-derivative}
\nabla_\theta
Q_t^{\theta,\kappa}(s,a_{\mathsf L})
=
\gamma
\int_{\mathcal S}
P_\kappa(ds'\mid s,a_{\mathsf L})
\nabla_\theta
V_{t+1}^{\theta,\kappa}(s').
\end{equation}
}

Define the current-stage policy derivative contribution
{\small
\begin{equation}
\label{eq:method-local-gradient}
G_t^{\theta,\kappa}(s)
:=
\sum_{a_{\mathsf L}\in\mathcal A_{\mathsf L}}
\nabla_\theta\pi_\theta(a_{\mathsf L}\mid s)
Q_t^{\theta,\kappa}(s,a_{\mathsf L}).
\end{equation}
}
Differentiating \eqref{eq:method-bellman-b} with respect to $\theta$ and using \eqref{eq:Q-theta-derivative} gives\footnote{Here and in the following, we assume that differentiation can be interchanged with the integrals in the Bellman and
state-distribution recursions under standard regularity conditions.}
{\small\begin{equation}
\label{eq:method-value-derivative}
\begin{aligned}
\nabla_\theta& V_t^{\theta,\kappa}(s)
=G_t^{\theta,\kappa}(s)\\
&+\gamma
\sum_{a_{\mathsf L}\in\mathcal A_{\mathsf L}}
\pi_\theta(a_{\mathsf L}\mid s)
\int_{\mathcal S}
P_\kappa(ds'\mid s,a_{\mathsf L})
\nabla_\theta V_{t+1}^{\theta,\kappa}(s').
\end{aligned}
\end{equation}}

We also characterize the state distribution induced by the policy and transition dynamics. For a given initial state $s_0$, let
$\mu_t^{\theta,\kappa}(\cdot\mid s_0)$ denote the state distribution
at stage $t$ obtained by propagation through
$\pi_\theta$ and $P_\kappa$, recursively defined by

\vspace{-0.5em}
{\small\begin{subequations}
\begin{align}
\mu_0^{\theta,\kappa}(ds\mid s_0)
&:=
\delta_{s_0}(ds),
\\
\mu_{t+1}^{\theta,\kappa}(ds'\mid s_0)
&:=
\int_{\mathcal S}
\mu_t^{\theta,\kappa}(ds\mid s_0)
\sum_{a_{\mathsf L}\in\mathcal A_{\mathsf L}}
\pi_\theta(a_{\mathsf L}\mid s)
P_\kappa(ds'\mid s,a_{\mathsf L}),
\label{eq:state-distribution-propagation}
\end{align}
\end{subequations}}where $\delta_{s_0}$ denotes the Dirac measure (point mass) at $s_0$.
Based on \eqref{eq:objective} and the definition of
$V_0^{\theta,\kappa}$,
\begin{equation}\label{eq:method-objective-orig}
    J(\theta;\kappa)
=
\mathbb E_{s_0\sim\rho_0}
\left[V_0^{\theta,\kappa}(s_0)\right].
\end{equation}
Since $\rho_0$ is independent of both $\theta$ and $\kappa$, $\nabla_\theta J(\theta;\kappa)=\mathbb E_{s_0\sim\rho_0}\bigl[\nabla_\theta V_0^{\theta,\kappa}(s_0)\bigr]$. 

Unrolling \eqref{eq:method-value-derivative} for $\mathcal T$ stages using \eqref{eq:state-distribution-propagation}, and then differentiating with respect to $\theta$ and $\kappa$, gives the following second-derivative decompositions. Here, $q$ indexes stages.
\begin{lemma}\label{lem:finite-depth}
Under Assumption~\ref{as:info-available}, for every $\mathcal T\in\{1,\dots,H\}$,

\vspace{-1em}
\begin{subequations}\label{eq:T-step2}
{\small\begin{align}\label{eq:T-step-hessian}
\nabla_{\theta\theta}^2 J(\theta;\kappa)
=&
\mathbb E_{s_0\sim\rho_0}
\Bigg[
\sum_{q=0}^{\mathcal T-1}\gamma^q
\frac{\partial}{\partial\theta^\top}
\left(
\int_{\mathcal S}
\mu_q^{\theta,\kappa}(ds\mid s_0)
G_q^{\theta,\kappa}(s)
\right)
\nonumber\\
&+
\gamma^{\mathcal T}
\int_{\mathcal S}
\nabla_\theta V_{\mathcal T}^{\theta,\kappa}(s)
\bigl[
\nabla_\theta
\mu_{\mathcal T}^{\theta,\kappa}(ds\mid s_0)
\bigr]^\top
\nonumber\\
&+
\gamma^{\mathcal T}
\int_{\mathcal S}
\mu_{\mathcal T}^{\theta,\kappa}(ds\mid s_0)
\nabla_{\theta\theta}^2
V_{\mathcal T}^{\theta,\kappa}(s)
\Bigg],\\
\label{eq:T-step-mixed}
\nabla_{\theta\kappa}^2 J(\theta;\kappa)
=&
\mathbb E_{s_0\sim\rho_0}
\Bigg[
\sum_{q=0}^{\mathcal T-1}\gamma^q
\frac{\partial}{\partial\kappa^\top}
\left(
\int_{\mathcal S}
\mu_q^{\theta,\kappa}(ds\mid s_0)
G_q^{\theta,\kappa}(s)
\right)
\nonumber\\
&+
\gamma^{\mathcal T}
\int_{\mathcal S}
\nabla_\theta V_{\mathcal T}^{\theta,\kappa}(s)
\bigl[
\nabla_\kappa
\mu_{\mathcal T}^{\theta,\kappa}(ds\mid s_0)
\bigr]^\top
\nonumber\\
&+
\gamma^{\mathcal T}
\int_{\mathcal S}
\mu_{\mathcal T}^{\theta,\kappa}(ds\mid s_0)
\nabla_{\theta\kappa}^2
V_{\mathcal T}^{\theta,\kappa}(s)
\Bigg].
\end{align}}
\end{subequations}
\end{lemma}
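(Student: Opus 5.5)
We first establish a first-order identity, namely a $\mathcal T$-step unrolling of the gradient recursion \eqref{eq:method-value-derivative}. The target identity is
\begin{equation*}
\nabla_\theta V_0^{\theta,\kappa}(s_0)=\sum_{q=0}^{\mathcal T-1}\gamma^q\int_{\mathcal S}\mu_q^{\theta,\kappa}(ds\mid s_0)G_q^{\theta,\kappa}(s)+\gamma^{\mathcal T}\int_{\mathcal S}\mu_{\mathcal T}^{\theta,\kappa}(ds\mid s_0)\nabla_\theta V_{\mathcal T}^{\theta,\kappa}(s),
\end{equation*}
valid for every $\mathcal T\in\{1,\dots,H\}$. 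We prove it by induction on $\mathcal T$. For $\mathcal T=1$ it is exactly \eqref{eq:method-value-derivative} at $t=0$, since $\mu_0=\delta_{s_0}$ and $\mu_1(ds'\mid s_0)=\sum_{a_{\mathsf L}}\pi_\theta(a_{\mathsf L}\mid s_0)P_\kappa(ds'\mid s_0,a_{\mathsf L})$. For the inductive step, we substitute \eqref{eq:method-value-derivative} at $t=\mathcal T$ into the boundary term $\int\mu_{\mathcal T}\nabla_\theta V_{\mathcal T}$. The $G_{\mathcal T}$ part becomes the new $q=\mathcal T$ summand. The remaining part is
\begin{equation*}
\gamma\int\mu_{\mathcal T}(ds)\sum_{a_{\mathsf L}}\pi_\theta(a_{\mathsf L}\mid s)\int P_\kappa(ds'\mid s,a_{\mathsf L})\nabla_\theta V_{\mathcal T+1}(s'),
\end{equation*}
which, by Fubini and \eqref{eq:state-distribution-propagation}, equals $\gamma\int\mu_{\mathcal T+1}(ds')\nabla_\theta V_{\mathcal T+1}(s')$. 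Taking $\mathbb E_{s_0\sim\rho_0}$ and using that $\rho_0$ does not depend on $(\theta,\kappa)$ then gives the corresponding expression for $\nabla_\theta J(\theta;\kappa)$.

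Next we differentiate this identity once more, with respect to $\theta^\top$ for \eqref{eq:T-step-hessian} and with respect to $\kappa^\top$ for \eqref{eq:T-step-mixed}. The derivative passes inside $\mathbb E_{\rho_0}$ because $\rho_0$ is parameter-free. In the sum over $q$ we do not expand anything; we simply write $\partial/\partial\theta^\top$ (resp. $\partial/\partial\kappa^\top$) of each integral $\int\mu_qG_q$, which matches the statement verbatim. The boundary term $\int\mu_{\mathcal T}(ds\mid s_0)\nabla_\theta V_{\mathcal T}(s)$ depends on the parameter through both the measure $\mu_{\mathcal T}$ and the integrand $\nabla_\theta V_{\mathcal T}$. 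The product rule therefore gives two pieces. The first is $\int\nabla_\theta V_{\mathcal T}(s)\,[\nabla_\theta\mu_{\mathcal T}(ds\mid s_0)]^\top$, a signed vector measure obtained by differentiating the recursion \eqref{eq:state-distribution-propagation}. It is well defined because $\pi_\theta$ is $C^2$ in $\theta$ and $P_\kappa$ is differentiable in $\kappa$ (Assumption~\ref{as:info-available}(ii),(iii)); the $\kappa$-version uses $\nabla_\kappa\mu_{\mathcal T}$. The second is $\int\mu_{\mathcal T}\nabla^2_{\theta\theta}V_{\mathcal T}$, or respectively $\int\mu_{\mathcal T}\nabla^2_{\theta\kappa}V_{\mathcal T}$. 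Collecting the three groups yields \eqref{eq:T-step2} for every $\mathcal T$. The edge case $\mathcal T=H$ is consistent: there $V_H\equiv0$, both boundary terms vanish, and the formula reduces to the full sum.

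The main obstacle is justifying that differentiation may be interchanged with the integrals over $\mathcal S$, and that $\mu_{\mathcal T}$ is differentiable as a measure-valued map so that the product rule holds. We would handle this by invoking the standing interchange assumption stated in the footnote, together with Assumption~\ref{as:info-available}(ii)--(iv). With that in hand, the remaining work is to confirm that $V_{\mathcal T}$ is twice differentiable in $\theta$ and jointly differentiable in $(\theta,\kappa)$. This follows by backward induction on $t$ through \eqref{eq:method-bellman}: the action space $\mathcal A_{\mathsf L}$ is finite, $\pi_\theta$ is $C^2$, $P_\kappa$ is differentiable, and $r$ is parameter-free. Hence each stage of the recursion preserves the needed differentiability, provided the rewards are bounded or the interchange assumption covers integrability.
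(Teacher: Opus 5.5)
Your proposal is correct and follows essentially the same route as the paper's proof: the paper also unrolls the gradient recursion \eqref{eq:method-value-derivative} for $\mathcal T$ stages using \eqref{eq:state-distribution-propagation} to get the $\mathcal T$-step gradient identity, which you make explicit by induction and Fubini. It then differentiates once more in $\theta$ or $\kappa$, applies the product rule to the boundary integral, and relies on the same standing interchange-of-differentiation assumption, with its listed derivative recursions for $\mu_t$, $G_t$, $Q_t$, $V_t$ serving the role of your backward-induction remark on the differentiability of $V_{\mathcal T}$.
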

The derivatives inside the sums act on both $\mu_q^{\theta,\kappa}$ and $G_q^{\theta,\kappa}$. The proof of Lemma~\ref{lem:finite-depth} is given in the Appendix.

For any $\mathcal T\in\{1,\ldots,H\}$, evaluating \eqref{eq:T-step-hessian} and \eqref{eq:T-step-mixed} at $(\theta^\star(\kappa_{\mathrm{old}}),\kappa_{\mathrm{old}})$ gives $\mathbf H_{\mathrm{old}}$ and $\mathbf M_{\mathrm{old}}$, respectively. 
Thus, choosing the propagation depth $\mathcal T$ in Lemma~\ref{lem:finite-depth} only specifies where the exact decomposition is split; it does not introduce an approximation.
For $\mathcal T<H$, the remaining-horizon dependence appears through both the boundary terms and $G_q^{\theta,\kappa}$ through $Q_q^{\theta,\kappa}$.
To limit model-based propagation to $\mathcal T$ stages, in the following, we replace the value function $V_{\mathcal T}^{\theta,\kappa}$ by a fixed estimate, and recompute the preceding local contributions through Bellman recursion.

\subsection{Finite-Depth Sensitivity Approximation}
\label{sub:finite-depth-approximation}
From the reference data, take $N$ initial-state samples
$s_{i,0}\sim\rho_0$, $i=1,\ldots,N$, and choose a propagation
depth $\mathcal T\in\{1,\ldots,H\}$.
Let $\widehat V_{\mathcal T}$ denote an estimate of
$V_{\mathcal T}^{\theta^\star(\kappa_{\mathrm{old}}),
\kappa_{\mathrm{old}}}$. This estimate may be supplied by a reference-policy critic or fitted from reference trajectories.
Initialize $\widehat V_{\mathcal T}^{(\mathcal T),\theta,\kappa} :=\widehat V_{\mathcal T}$, holding this function fixed with respect to $\theta$ and $\kappa$. Set $\widehat V_{\mathcal T}=0$ at terminal states, and use $\widehat V_H\equiv0$ when $\mathcal T=H$. For $t=\mathcal T-1,\ldots,0$, define

\vspace{-1em}
{\small
\begin{subequations}
\label{eq:finite-depth-bellman}
\begin{align}
\widehat Q_t^{(\mathcal T),\theta,\kappa}
(s,a_{\mathsf L})
\nonumber
&:=
\int_{\mathcal S}
P_\kappa(ds'\mid s,a_{\mathsf L})\\
&\quad \cdot \Bigl[
r(s,a_{\mathsf L},s')
+
\gamma
\widehat V_{t+1}^{(\mathcal T),\theta,\kappa}(s')
\Bigr],
\label{eq:finite-depth-Q}
\\
\widehat V_t^{(\mathcal T),\theta,\kappa}(s)
&:=
\sum_{a_{\mathsf L}\in\mathcal A_{\mathsf L}}
\pi_\theta(a_{\mathsf L}\mid s)
\widehat Q_t^{(\mathcal T),\theta,\kappa}
(s,a_{\mathsf L}).
\label{eq:finite-depth-V}
\end{align}
\end{subequations}
}
Define the approximate local contribution by

{\small
\begin{equation}
\label{eq:finite-depth-local-gradient}
\widehat G_t^{(\mathcal T),\theta,\kappa}(s)
:=
\sum_{a_{\mathsf L}\in\mathcal A_{\mathsf L}}
\nabla_\theta\pi_\theta(a_{\mathsf L}\mid s)
\widehat Q_t^{(\mathcal T),\theta,\kappa}
(s,a_{\mathsf L}).
\end{equation}
}
The backward derivative recursions in the Appendix apply with $V_t^{\theta,\kappa}$, $Q_t^{\theta,\kappa}$, and $G_t^{\theta,\kappa}$ replaced by their corresponding truncated quantities. All parameter derivatives of $\widehat V_{\mathcal T}$ vanish because $\widehat V_{\mathcal T}$ is fixed. The distributions $\mu_t^{\theta,\kappa}$ and their derivatives are propagated forward from $\mu_0=\delta_{s_{i,0}}$ and zero initial derivatives.

Applying the derivation of Lemma~\ref{lem:finite-depth} to the truncated recursions eliminates its boundary derivative terms. Expanding the derivatives of the stage integrals by the product rule and replacing the expectation under $\rho_0$ by the sample average gives

\vspace{-1.5em}
{\small
\begin{subequations}
\label{eq:method-estimate}
\begin{align}
\label{eq:method-hessian-estimate}
\widehat{\mathbf H}_{\mathrm{old}}^{(\mathcal T)}
:=&
\frac{1}{N}\sum_{i=1}^{N}
\sum_{q=0}^{\mathcal T-1}\gamma^q
\Bigg[
\int_{\mathcal S}
\widehat G_q^{(\mathcal T),\theta,\kappa}(s)
\bigl[
\nabla_\theta
\mu_q^{\theta,\kappa}(ds\mid s_{i,0})
\bigr]^\top
\nonumber\\
&+
\int_{\mathcal S}
\mu_q^{\theta,\kappa}(ds\mid s_{i,0})
\frac{\partial
\widehat G_q^{(\mathcal T),\theta,\kappa}(s)}
{\partial\theta^\top}
\Bigg],
\\
\label{eq:method-mixed-estimate}
\widehat{\mathbf M}_{\mathrm{old}}^{(\mathcal T)}
:=&
\frac{1}{N}\sum_{i=1}^{N}
\sum_{q=0}^{\mathcal T-1}\gamma^q
\Bigg[
\int_{\mathcal S}
\widehat G_q^{(\mathcal T),\theta,\kappa}(s)
\bigl[
\nabla_\kappa
\mu_q^{\theta,\kappa}(ds\mid s_{i,0})
\bigr]^\top
\nonumber\\
&+
\int_{\mathcal S}
\mu_q^{\theta,\kappa}(ds\mid s_{i,0})
\frac{\partial
\widehat G_q^{(\mathcal T),\theta,\kappa}(s)}
{\partial\kappa^\top}
\Bigg],
\end{align}
\end{subequations}
}where all quantities on the right-hand sides are evaluated at $(\theta,\kappa)=(\theta^\star(\kappa_{\mathrm{old}}),\kappa_{\mathrm{old}})$. 
The approximation retains information about future return through $\widehat V_{\mathcal T}$ while setting its parameter derivatives to zero.
Furthermore, at $\mathcal T=H$, the boundary value is zero, so the construction retains full-horizon derivative propagation and introduces
no truncation.

Using the relation
$\mathbf H_{\mathrm{old}}\mathbf S_{\mathrm{old}}
+\mathbf M_{\mathrm{old}}=0$,
when $\widehat{\mathbf H}_{\mathrm{old}}^{(\mathcal T)}$ is nonsingular, the sensitivity estimate is given by
{\small
\begin{equation}
\label{eq:method-sensitivity-estimate}
\widehat{\mathbf S}_{\mathrm{old}}^{(\mathcal T)}
:=
-\left[\widehat{\mathbf H}_{\mathrm{old}}^{(\mathcal T)}
\right]^{-1}\widehat{\mathbf M}_{\mathrm{old}}^{(\mathcal T)}.
\end{equation}
}

For the parameter change
$\kappa_{\mathrm{new}}=\kappa_{\mathrm{old}}+\Delta\kappa$
in Section~\ref{sub:application}, the estimated policy sensitivity gives the initial policy parameter for adaptation in the target environment $\mathcal M(\kappa_{\mathrm{new}})$:
{\small
\begin{equation}
\label{eq:method-initialization}
\theta_{\mathrm{init}}
:=
\theta^\star(\kappa_{\mathrm{old}})
+
\widehat{\mathbf S}_{\mathrm{old}}^{(\mathcal T)}
\Delta\kappa.
\end{equation}
}

\section{Sensitivity Error Analysis}\label{sec:sensitivity-error}
This section analyzes the approximation error introduced by finite-depth derivative truncation. To isolate this structural truncation error from finite-sample and value-function approximation errors, we consider an ideal setting in which 
$\widehat V_{\mathcal T}
=V_{\mathcal T}^{\theta^\star(\kappa_{\mathrm{old}}),
\kappa_{\mathrm{old}}}$
and the sample averages in \eqref{eq:method-estimate} are replaced by exact expectations under $\rho_0$.
We perform this isolation because these two sources of error are complementary to the finite-depth truncation considered in this paper. %
Nevertheless, their practical effects are examined empirically in Section~V.

\subsection{Derivative Truncation Error}
Under the ideal setting described above, $\widehat{\mathbf H}_{\mathrm{old}}^{(\mathcal T)}$ and $\widehat{\mathbf M}_{\mathrm{old}}^{(\mathcal T)}$ differ from $\mathbf H_{\mathrm{old}}$ and $\mathbf M_{\mathrm{old}}$ only due to
the truncation of parameter-derivative propagation beyond propagation depth $\mathcal T$. Next, we analyze the bounds of these truncation errors under the following assumption.
\begin{assumption}[Uniform Derivative Bounds]
\label{ass:uniform-derivative-bounds}
There exist finite nonnegative constants
$R$, $C_\theta$, $C_{\theta\theta}$, and $C_\kappa$
such that

{\small
\begin{equation}
\label{eq:uniform-derivative-bounds}
\begin{aligned}
|r(s,a_{\mathsf L},s')|
&\le R,
\\
\sup_{s\in\mathcal S}
\sum_{a_{\mathsf L}\in\mathcal A_{\mathsf L}}
\left\|
\nabla_\theta\pi_\theta(a_{\mathsf L}\mid s)
\right\|_2
&\le C_\theta,
\\
\sup_{s\in\mathcal S}
\sum_{a_{\mathsf L}\in\mathcal A_{\mathsf L}}
\left\|
\nabla_{\theta\theta}^2
\pi_\theta(a_{\mathsf L}\mid s)
\right\|_2
&\le C_{\theta\theta},
\\
\sup_{w\in\mathbb R^{d_\kappa},~\|w\|_2=1}
\int_{\mathcal S}
\left|
w^\top\nabla_\kappa
P_\kappa(ds'\mid s,a_{\mathsf L})
\right|
&\le C_\kappa.
\end{aligned}
\end{equation}
}hold uniformly over states and actions. All derivatives are evaluated at $(\theta^\star(\kappa_{\mathrm{old}}),\kappa_{\mathrm{old}})$. The last integral is the total variation of the scalar signed measure $w^\top\nabla_\kappa P_\kappa(\cdot\mid s,a_{\mathsf L})$.%
\end{assumption}

Assumption~\ref{ass:uniform-derivative-bounds} provides uniform bounds on the reward and the policy and transition derivatives needed to bound each omitted derivative.
In particular, the total-variation bound on $\nabla_\kappa P_\kappa$ controls the effect of perturbations in the interactive agent's behavioral parameter on the transition distribution.

\begin{theorem}[Derivative Truncation Error]
\label{the:derivative-truncation}
Under Assumptions~\ref{as:info-available}-\ref{ass:uniform-derivative-bounds} and the ideal setting described in this section, 
for every $\mathcal T\in\{1,\ldots,H\}$, there hold

\vspace{-0.8em}
{\small
\begin{equation}
\label{eq:derivative-truncation-bounds}
\|\widehat{\mathbf H}_{\mathrm{old}}^{(\mathcal T)}
-\mathbf H_{\mathrm{old}}\|_2
\le \eta_{\mathbf H}^{(\mathcal T)},\quad
\|\widehat{\mathbf M}_{\mathrm{old}}^{(\mathcal T)}
-\mathbf M_{\mathrm{old}}\|_2
\le \eta_{\mathbf M}^{(\mathcal T)},
\end{equation}
}
where
{\small
\begin{align}
\label{eq:hessian-truncation-bound}
\eta_{\mathbf H}^{(\mathcal T)}
:={}R\gamma^{\mathcal T}
\sum_{q=0}^{H-\mathcal T-1}\gamma^q(q+1)
\left[C_{\theta\theta}
+(q+2\mathcal T)C_\theta^2\right],\\
\label{eq:mixed-truncation-bound}
\eta_{\mathbf M}^{(\mathcal T)}
:={}RC_\theta C_\kappa\gamma^{\mathcal T}
\sum_{q=0}^{H-\mathcal T-1}
\gamma^q(q+1)(q+2\mathcal T+1).
\end{align}
}An empty sum is zero.
Both bounds are nonincreasing in $\mathcal T$ and vanish at $\mathcal T=H$. For $0<\gamma<1$, %
the bounds additionally admit the following discounted envelopes:

\vspace{-0.5em}
{\small\begin{equation}
\label{eq:geometric-truncation-bounds}
\eta_{\mathbf H}^{(\mathcal T)}
\le
\frac{R[C_{\theta\theta}+2(H-1)C_\theta^2]}
{(1-\gamma)^2}\gamma^{\mathcal T},~
\eta_{\mathbf M}^{(\mathcal T)}
\le
\frac{R(2H-1)C_\theta C_\kappa}
{(1-\gamma)^2}\gamma^{\mathcal T}.
\end{equation}}
\end{theorem}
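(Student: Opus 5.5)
Under the ideal setting, $\widehat V_{\mathcal T}=V_{\mathcal T}^{\theta^\star,\kappa_{\mathrm{old}}}$. At the evaluation point, the truncated recursion \eqref{eq:finite-depth-bellman} therefore reproduces the exact values: $\widehat Q_t^{(\mathcal T)}=Q_t$, $\widehat V_t^{(\mathcal T)}=V_t$ and $\widehat G_t^{(\mathcal T)}=G_t$ for $t<\mathcal T$. Only the \emph{derivatives} of these quantities differ, because $\widehat V_{\mathcal T}$ is held fixed in $(\theta,\kappa)$.

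\textbf{Step 1: express the errors through the boundary derivatives.} I will compare \eqref{eq:method-estimate} with the exact decomposition of Lemma~\ref{lem:finite-depth} at the same $\mathcal T$. The truncated recursion is the exact Bellman recursion with terminal stage $\mathcal T$ and a parameter-free terminal value. Applying the derivation of Lemma~\ref{lem:finite-depth} to it gives the stage sum with $\widehat G$ and no boundary terms. Hence $\widehat{\mathbf H}^{(\mathcal T)}-\mathbf H_{\mathrm{old}}$ is minus the effect of $\nabla_\theta V_{\mathcal T}$ and $\nabla^2_{\theta\theta}V_{\mathcal T}$. Concretely, by linearity of the recursions, $\partial_\theta\widehat G_q-\partial_\theta G_q$ equals the $\theta$-derivative of $\sum_a\nabla\pi\,\gamma^{\mathcal T-q}\mathbb E[\cdot]$ propagated from $\nabla_\theta V_{\mathcal T}$. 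Collecting terms, the total error equals the boundary terms of \eqref{eq:T-step2} plus the contributions of $\nabla_\theta V_{\mathcal T}$ that pass through $G_q$ for $q<\mathcal T$. An equivalent and cleaner route is to unroll everything to $H$. Then $\mathbf H_{\mathrm{old}}=\sum_{q=0}^{H-1}\gamma^q\,\partial_{\theta^\top}\!\int\mu_q G_q$, where $G_q$ for $q<\mathcal T$ contains future returns only through $Q_q$. The omitted contributions are precisely the stage-$(\mathcal T+q)$ terms for $q=0,\dots,H-\mathcal T-1$, with derivatives taken either on $\mu_{\mathcal T+q}$ (via the first $\mathcal T$ stages or the later $q$ stages) or on $G_{\mathcal T+q}$.

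\textbf{Step 2: bound each omitted stage term.} The basic estimates are:
\begin{itemize}
\item $|Q_t|\le R(H-t)\le$ a sum of $R$'s, so $\|G_t\|\le C_\theta\sum_{\ell}\gamma^\ell R$.
\item $\|\nabla_\theta\mu_t\|_{TV}\le tC_\theta$, by differentiating \eqref{eq:state-distribution-propagation} and using that stochastic kernels are TV-contractions.
\item $\|\nabla_\kappa\mu_t\|_{TV}\le tC_\kappa$.
\end{itemize}
I would write each omitted term as a reward at stage $\mathcal T+q+j$ weighted by a derivative of the trajectory law, and then bound the second-order score-type terms.
\begin{itemize}
\item For the Hessian, the reward at depth $m=\mathcal T+q$ gets $C_{\theta\theta}$ from one $\nabla^2\pi$ at a single stage among $m+1$ stages. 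It gets $C_\theta^2$ from pairs of first derivatives, which are bounded by (stages)$\times$(stages) products. Organizing the counting so that one derivative lies beyond $\mathcal T$ should give factors $(q+1)$ and $(q+2\mathcal T)$.
\item For the mixed term, one $\nabla\pi$ and one $\nabla_\kappa P$ give $C_\theta C_\kappa (q+1)(q+2\mathcal T+1)$.
\end{itemize}
The main obstacle is this combinatorial bookkeeping: counting exactly which derivative placements are omitted (those involving at least one stage $\ge\mathcal T$ acting on a boundary-carried return) so that the constants match $(q+1)[C_{\theta\theta}+(q+2\mathcal T)C_\theta^2]$. I would first do it by writing the omitted part as $\sum_{q}\gamma^{\mathcal T+q}\mathbb E[r_{\mathcal T+q}\cdot(\text{second derivative of the law restricted to pairs with a stage}\ge\mathcal T)]$. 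I would then count the pairs (ordered, within the window $[\mathcal T,\mathcal T+q]$ or crossing) and use the operator-norm bound from Assumption~\ref{ass:uniform-derivative-bounds}.

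\textbf{Step 3: monotonicity and envelopes.} At $\mathcal T=H$ the sums are empty. For monotonicity, I will reindex with $m=\mathcal T+q$: $\eta^{(\mathcal T)}=R\sum_{m=\mathcal T}^{H-1}\gamma^m(m-\mathcal T+1)[C_{\theta\theta}+(m+\mathcal T)C_\theta^2]$. Each summand satisfies $(m-\mathcal T+1)(m+\mathcal T)=m(m+1)-\mathcal T(\mathcal T-1)$, which is nonincreasing in $\mathcal T$, and the range shrinks as $\mathcal T$ grows, so $\eta^{(\mathcal T)}$ is nonincreasing. The $\mathbf M$ bound is handled the same way. For the envelopes, I will use $q+2\mathcal T\le 2(H-1)$ and $q+2\mathcal T+1\le 2H-1$, valid since $q\le H-\mathcal T-1$, together with $\sum_q\gamma^q(q+1)\le(1-\gamma)^{-2}$.
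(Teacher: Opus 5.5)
Your plan follows the paper's proof in outline. In the ideal setting the values of $\widehat Q_t^{(\mathcal T)},\widehat G_t^{(\mathcal T)}$ coincide with $Q_t,G_t$, so the $\nabla\mu_q$ terms cancel. The estimate is then the exact second derivative of $\mathbb E[\sum_{t<\mathcal T}\gamma^t r_t+\gamma^{\mathcal T}V_{\mathcal T}^{\theta^\star(\kappa_{\mathrm{old}}),\kappa_{\mathrm{old}}}(s_{\mathcal T})]$ with the boundary value frozen. The error is bounded reward by reward, by counting derivative placements with at least one differentiated factor at a stage $\ge\mathcal T$. Step 3 also matches: your identity $(m-\mathcal T+1)(m+\mathcal T)=m(m+1)-\mathcal T(\mathcal T-1)$ is exactly the paper's pre-reindexing form.

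However, the ``equivalent and cleaner route'' in Step 1 is not equivalent and must be dropped. The identity $\gamma^{\mathcal T}\int\mu_{\mathcal T}\nabla_\theta V_{\mathcal T}=\sum_{m\ge\mathcal T}\gamma^m\int\mu_m G_m$ holds for all $(\theta,\kappa)$. Hence the stage-$m\ge\mathcal T$ terms of the full unrolling are exactly the boundary terms of Lemma~\ref{lem:finite-depth}. As you correctly said one sentence earlier, the error also contains $\sum_{q<\mathcal T}\gamma^q\int\mu_q\sum_{a_{\mathsf L}}\nabla_\theta\pi_\theta(a_{\mathsf L}\mid s)[\nabla_\theta Q_q-\nabla_\theta\widehat Q_q^{(\mathcal T)}]^\top$ and its $\kappa$-analogue. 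These appear because $\partial\widehat G_q^{(\mathcal T)}\ne\partial G_q$ for $q<\mathcal T$.

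These extra pieces are the crossing placements. In them, the policy factor carrying the gradient derivative sits at a stage $i<\mathcal T$, and the other differentiated factor sits at $j\ge\mathcal T$; that factor is a second policy factor for $\mathbf H$ or a kernel for $\mathbf M$. There are $\mathcal T(q+1)$ of them per reward $r_{\mathcal T+q}$. Counting only stage-$\ge\mathcal T$ terms would give coefficients $(q+1)(q+\mathcal T)$ and $(q+1)(q+\mathcal T+1)$, which bound only part of the error.

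Your final per-reward description in Step 2 avoids this, and the bookkeeping you defer is immediate:
\begin{itemize}
\item Ordered pairs of distinct stages inside $[\mathcal T,\mathcal T+q]$ number $q(q+1)$.
\item Ordered crossing pairs number $2\mathcal T(q+1)$, so together with the previous count the total is $(q+1)(q+2\mathcal T)$.
\item There are additionally $q+1$ single-stage $\nabla^2_{\theta\theta}\pi$ placements.
\item For $\mathbf M$ the count is $(q+\mathcal T+1)^2-\mathcal T^2=(q+1)(q+2\mathcal T+1)$.
\end{itemize}
Each placement is bounded by $RC_{\theta\theta}$, $RC_\theta^2$ or $RC_\theta C_\kappa$, because the undifferentiated factors are probabilities. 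So the TV bounds on $\nabla\mu_t$ and the bound $|Q_t|\le R(H-t)$ that you list are not needed for the stated form.
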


\begin{proof}
The proof is done by comparing the finite-depth estimates \eqref{eq:method-estimate} with the exact decompositions \eqref{eq:T-step2}.

Under the ideal setting of this section, $\widehat V_{\mathcal T}=V_{\mathcal T}^{\theta^\star(\kappa_{\mathrm{old}}), \kappa_{\mathrm{old}}}$. 
Thus, for $(\theta,\kappa)=(\theta^\star(\kappa_{\mathrm{old}}), \kappa_{\mathrm{old}})$, the backward recursions in \eqref{eq:finite-depth-bellman} yield $\widehat Q_t^{(\mathcal T),\theta,\kappa} =Q_t^{\theta,\kappa}$ and $\widehat G_t^{(\mathcal T),\theta,\kappa}
=G_t^{\theta,\kappa}$ for $t<\mathcal T$.
However, although their values are the same, their parameter derivatives are generally different, because $\widehat V_{\mathcal T}$ is held fixed with respect to $\theta$ and $\kappa$, whereas
$V_{\mathcal T}^{\theta,\kappa}$ depends on both parameters.
Consequently, when comparing \eqref{eq:method-estimate} with the exact decompositions \eqref{eq:T-step2} in Lemma~\ref{lem:finite-depth}, the terms multiplying derivatives of $\mu_t^{\theta,\kappa}$ cancel. The remaining error arises from setting the parameter derivatives of $V_{\mathcal T}^{\theta,\kappa}$ to zero in the finite-depth recursion and the resulting missing derivative contributions at earlier stages.

To characterize these omitted derivative contributions, unroll the
truncated Bellman recursion in \eqref{eq:finite-depth-bellman}.
Under the ideal setting, the estimates in
\eqref{eq:method-estimate} are equivalently the Hessian and mixed
derivative, evaluated at the reference parameters, of
\begin{equation}
\mathbb E_{\xi\sim\mathbb P_{\theta,\kappa}}
\left[
\sum_{t=0}^{\mathcal T-1}\gamma^t r_t
+\gamma^{\mathcal T}
V_{\mathcal T}^{
\theta^\star(\kappa_{\mathrm{old}}),
\kappa_{\mathrm{old}}}(s_{\mathcal T})
\right],
\end{equation}
where the boundary value
$V_{\mathcal T}^{\theta^\star(\kappa_{\mathrm{old}}),
\kappa_{\mathrm{old}}}$ is treated as fixed with respect to
$\theta$ and $\kappa$.
Thus, the finite-depth approximation retains parameter derivatives
through stages $0,\ldots,\mathcal T-1$ while omitting derivative terms involving
policy or transition factors at stage $\mathcal T$ or later.

We now decompose the derivative error according to the individual
stage-reward contributions in \eqref{eq:objective}. 
For a fixed
$q$, the expected contribution of $r_q$ can be written as

\vspace{-1em}
{\small
\begin{align}
\mathbb E_{\xi\sim\mathbb P_{\theta,\kappa}}[r_q]
=&
\int_{\mathcal S}\rho_0(ds_0)
\int_{\mathcal S}\mu_q^{\theta,\kappa}(ds_q\mid s_0)
\sum_{a_{\mathsf L,q}\in\mathcal A_{\mathsf L}}
\pi_\theta(a_{\mathsf L,q}\mid s_q)\nonumber\\
&\cdot
\int_{\mathcal S}
P_\kappa(ds_{q+1}\mid s_q,a_{\mathsf L,q})
r(s_q,a_{\mathsf L,q},s_{q+1}).
\end{align}}Expanding $\mu_q^{\theta,\kappa}$ using \eqref{eq:state-distribution-propagation} yields $q+1$ policy probabilities and $q+1$ transition kernels, corresponding to stages $0,\dots,q$. Since the reward and $\rho_0$ have no explicit parameter dependence on $\theta$ or $\kappa$, all derivatives of this expectation are retained when $q<\mathcal T$. For $q\ge\mathcal T$, the finite-depth approximation retains only the derivative terms whose differentiated factors occur before stage $\mathcal T$.

For $q\ge\mathcal T$, consider first the Hessian contribution of $r_q$. Differentiating the $q+1$ policy factors twice with respect to $\theta$ produces two types of terms.
Differentiating one policy probability twice gives $q+1$ terms. Differentiating two policy probabilities at distinct stages once each gives two mutually transposed terms per pair of stages, totaling $q(q+1)$ terms. The estimate $\widehat{\mathbf H}_{\mathrm{old}}^{(\mathcal T)}$ retains $\mathcal T$ terms differentiating one policy probability twice and $\mathcal T(\mathcal T-1)$ terms differentiating two policy probabilities at distinct stages once each, with all differentiated stages before $\mathcal T$.

For the mixed derivative, one $\theta$-derivative acts on one of the $q+1$ policy factors and one $\kappa$-derivative acts on one of the $q+1$ transition kernels.
Those contribute $(q+1)^2$ terms to $\mathbf M_{\mathrm{old}}$: there are $q+1$ choices of policy probability and $q+1$ choices of transition kernel to differentiate. The estimate $\widehat{\mathbf M}_{\mathrm{old}}^{(\mathcal T)}$ retains the $\mathcal T^2$ terms with both differentiated stages before $\mathcal T$.

Assumption~\ref{ass:uniform-derivative-bounds} bounds the above three types of terms in spectral norm by $RC_{\theta\theta}$, $RC_\theta^2$, and $RC_\theta C_\kappa$, respectively. %
The remaining undifferentiated policy probabilities and transition kernels form probability factors and therefore do not enlarge these bounds under summation and integration.
Multiplying by the omitted-term counts and $\gamma^q$, then applying the triangle inequality, gives

\vspace{-1em}
\begin{small}
\begin{align*}
\left\|
\widehat{\mathbf H}_{\mathrm{old}}^{(\mathcal T)}
-\mathbf H_{\mathrm{old}}
\right\|_2
\le&
R\sum_{q=\mathcal T}^{H-1}\gamma^q
\Big[
(q+1-\mathcal T)C_{\theta\theta}\\
&+\bigl(q(q+1)-\mathcal T(\mathcal T-1)\bigr)
C_\theta^2
\Big],\\
\left\|
\widehat{\mathbf M}_{\mathrm{old}}^{(\mathcal T)}
-\mathbf M_{\mathrm{old}}
\right\|_2
\le&
RC_\theta C_\kappa
\sum_{q=\mathcal T}^{H-1}\gamma^q
\bigl[(q+1)^2-\mathcal T^2\bigr].
\end{align*} \end{small}Reindexing both sums with $q=0,\ldots,H-\mathcal T-1$ yields \eqref{eq:hessian-truncation-bound} and \eqref{eq:mixed-truncation-bound}. The two right-hand sides are nonincreasing in $\mathcal T$ because increasing $\mathcal T$ removes the $q=\mathcal T$ terms and decreases the omitted-term count for every remaining stage. At $\mathcal T=H$, both sums are empty.

For $0<\gamma<1$ and $\mathcal T<H$, we have
$q+2\mathcal T\le 2H-2$ and $q+2\mathcal T+1\le 2H-1$ for $q=0,\ldots,H-\mathcal T-1$. Moreover,
there holds $\sum_{q=0}^{H-\mathcal T-1}\gamma^q(q+1)
\le
\sum_{q=0}^{\infty}\gamma^q(q+1)
=
(1-\gamma)^{-2}.$
Applying these inequalities gives
\eqref{eq:geometric-truncation-bounds}.
\end{proof}

\smallskip

Under the same ideal setting, the derivative bounds in Theorem~\ref{the:derivative-truncation} yield a sensitivity error bound. Since Assumption~\ref{as:info-available} gives $\mathbf H_{\mathrm{old}}\prec0$, let $m:=\lambda_{\min}(-\mathbf H_{\mathrm{old}})>0$. The following result follows directly.
\begin{corollary}[Sensitivity Truncation Error]
\label{cor:sensitivity-estimation-error}
Under the conditions of Theorem~\ref{the:derivative-truncation}, if $\eta_{\mathbf H}^{(\mathcal T)}<m$, then $\widehat{\mathbf H}_{\mathrm{old}}^{(\mathcal T)}$ is nonsingular, and the solution of \eqref{eq:method-sensitivity-estimate} satisfies
\begin{equation}
\label{eq:empirical-sensitivity-error}
\left\|
\widehat{\mathbf S}_{\mathrm{old}}^{(\mathcal T)}
-\mathbf S_{\mathrm{old}}
\right\|_2
\le
\frac{
\eta_{\mathbf M}^{(\mathcal T)}
+
\eta_{\mathbf H}^{(\mathcal T)}
\|\mathbf S_{\mathrm{old}}\|_2
}{
m-\eta_{\mathbf H}^{(\mathcal T)}
}.
\end{equation}
\end{corollary}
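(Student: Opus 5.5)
This is a standard matrix perturbation argument. Write $\widehat{\mathbf H}:=\widehat{\mathbf H}_{\mathrm{old}}^{(\mathcal T)}$, $\widehat{\mathbf M}:=\widehat{\mathbf M}_{\mathrm{old}}^{(\mathcal T)}$, $E_{\mathbf H}:=\widehat{\mathbf H}-\mathbf H_{\mathrm{old}}$ and $E_{\mathbf M}:=\widehat{\mathbf M}-\mathbf M_{\mathrm{old}}$. By Theorem~\ref{the:derivative-truncation}, $\|E_{\mathbf H}\|_2\le\eta_{\mathbf H}^{(\mathcal T)}$ and $\|E_{\mathbf M}\|_2\le\eta_{\mathbf M}^{(\mathcal T)}$.

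\emph{Nonsingularity.} Since $-\mathbf H_{\mathrm{old}}$ is symmetric positive definite with smallest eigenvalue $m$, we have $\sigma_{\min}(\mathbf H_{\mathrm{old}})=m$. Weyl's inequality for singular values then gives
\[
\sigma_{\min}(\widehat{\mathbf H})\ge m-\|E_{\mathbf H}\|_2\ge m-\eta_{\mathbf H}^{(\mathcal T)}>0.
\]
Hence $\widehat{\mathbf H}$ is nonsingular and $\|\widehat{\mathbf H}^{-1}\|_2\le (m-\eta_{\mathbf H}^{(\mathcal T)})^{-1}$. Equivalently, one can use a Neumann series: $\widehat{\mathbf H}=\mathbf H_{\mathrm{old}}(I+\mathbf H_{\mathrm{old}}^{-1}E_{\mathbf H})$ with $\|\mathbf H_{\mathrm{old}}^{-1}E_{\mathbf H}\|_2<1$. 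This route avoids any symmetry assumption on $\widehat{\mathbf H}$, which is appropriate because the finite-depth estimate need not be symmetric.

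\emph{Error identity.} Set $\widehat{\mathbf S}:=\widehat{\mathbf S}_{\mathrm{old}}^{(\mathcal T)}$, so that $\widehat{\mathbf H}\widehat{\mathbf S}+\widehat{\mathbf M}=0$, while \eqref{eq:sensitivity} gives $\mathbf H_{\mathrm{old}}\mathbf S_{\mathrm{old}}+\mathbf M_{\mathrm{old}}=0$. Subtracting, and adding and subtracting $\widehat{\mathbf H}\mathbf S_{\mathrm{old}}$, yields
\[
\widehat{\mathbf H}(\widehat{\mathbf S}-\mathbf S_{\mathrm{old}})=-E_{\mathbf M}-E_{\mathbf H}\mathbf S_{\mathrm{old}}.
\]
Multiplying by $\widehat{\mathbf H}^{-1}$, using submultiplicativity and the triangle inequality, and substituting the bounds from Theorem~\ref{the:derivative-truncation} gives
\[
\|\widehat{\mathbf S}-\mathbf S_{\mathrm{old}}\|_2\le\frac{\eta_{\mathbf M}^{(\mathcal T)}+\eta_{\mathbf H}^{(\mathcal T)}\|\mathbf S_{\mathrm{old}}\|_2}{m-\eta_{\mathbf H}^{(\mathcal T)}},
\]
which is \eqref{eq:empirical-sensitivity-error}.

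The only delicate step is the lower bound on $\sigma_{\min}(\widehat{\mathbf H})$ without assuming $\widehat{\mathbf H}$ is symmetric. The singular-value form of Weyl's inequality handles this directly, since it holds for arbitrary square matrices.
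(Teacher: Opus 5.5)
Your proof is correct and is the standard perturbation argument the paper relies on when it says the corollary ``follows directly'' from Theorem~\ref{the:derivative-truncation}. Weyl's inequality (or a Neumann series) gives $\|\widehat{\mathbf H}_{\mathrm{old}}^{(\mathcal T)\,-1}\|_2\le(m-\eta_{\mathbf H}^{(\mathcal T)})^{-1}$, and the identity $\widehat{\mathbf H}(\widehat{\mathbf S}-\mathbf S_{\mathrm{old}})=-E_{\mathbf M}-E_{\mathbf H}\mathbf S_{\mathrm{old}}$ finishes it. One aside: your claim that the finite-depth estimate need not be symmetric is inaccurate here, since in the ideal setting $\widehat{\mathbf H}_{\mathrm{old}}^{(\mathcal T)}$ is the Hessian of the truncated surrogate objective identified in the proof of Theorem~\ref{the:derivative-truncation}, hence symmetric; your singular-value route simply does not need this fact.
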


\section{Simulation}
We consider a pursuit--evasion game that captures the agent-induced environment change studied in Section~II: the physical dynamics and task remain fixed, while the decision policy of an interactive agent changes. 
The learning agent $\mathsf L$ jointly controls two attackers, $A_1$ and $A_2$, while a defender $D$ serves as the interactive agent $\mathsf O$. 
At each stage, within a grid world, the attackers move first, producing the intermediate state $\tilde s_t$. Then, the defender selects its move according to a behavior-parameterized policy $\psi(\cdot\mid\tilde s_t;\kappa)$, after which the next state is generated. Thus, the game follows the transition structure in \eqref{eq:induced-transition}, with changes in $\kappa$ affecting the
learning agent only through the defender's policy.

The interactive agent's behavioral parameter $\kappa$ characterizes how strongly the
defender relies on its belief about information hidden from it. Specifically,
among the two attackers, one of them ($\phi\in\{1,2\}$) holds a key and seeks to reach the target row $y=0$. The identity of the key holder $\phi$ is known to the attackers but hidden from the defender. The defender maintains a belief $b_t$ that $A_1$ holds the key and updates this belief by
Bayes' rule based on the observed attacker motions. Its policy is a mixture of two fixed pretrained policies, a belief-blind policy $\psi_0$ and a belief-aware policy $\psi_1$. We let
$\psi(\cdot\mid \tilde s_t;\kappa)
=
(1-\kappa)\psi_0(\cdot\mid \tilde s_t)
+
\kappa\psi_1(\cdot\mid\tilde s_t),~
\kappa\in[0,1].$
Hence, $\kappa$ controls the defender's reliance on its belief, and changing $\kappa$ changes the environment experienced by the learning agent. This provides a concrete instance of the parameterized family $\mathcal M(\kappa)$ in Section~II.

The state
$s_t=(s_t^D,s_t^{A_1},s_t^{A_2},\phi,b_t)$
is Markov for fixed $\kappa$. An episode terminates when an attacker
is captured, the key holder reaches the target row, or the horizon is
reached. The terminal reward reflects the key holder's progress toward
the target and the effectiveness of diverting the defender.
Initially, $\phi$ is uniform on $\{1,2\}$, $b_0$ is uniform on
$[0.1,0.9]$, and the agent positions are sampled according to a fixed distribution.
We consider two instances of the game for different evaluation purposes. 
A small scale uses a $6\times6$ grid with $H=15$ and $\gamma =0.9$. %
Its smaller state space allows exact evaluation of the derivative and sensitivity quantities used as benchmarks, so that we could analyze the estimation accuracy as the propagation depth changes.
A larger scale uses a $20\times20$ grid with $H=60$ and $\gamma =0.99$ to evaluate the effect of sensitivity-based initialization on zero-shot return and subsequent policy adaptation. 
In both instances, the learning agent's policy is trained using PPO. The PPO critic provides $\widehat V_{\mathcal T}$ when an estimated value function is used. 

The above description emphasizes the alignment between the experimental setting and the problem formulation. Additional environment and implementation details are provided in the project
GitHub repository~\cite{project_github}.

\begin{figure}[t]
    \centering
    \includegraphics[width=\linewidth]{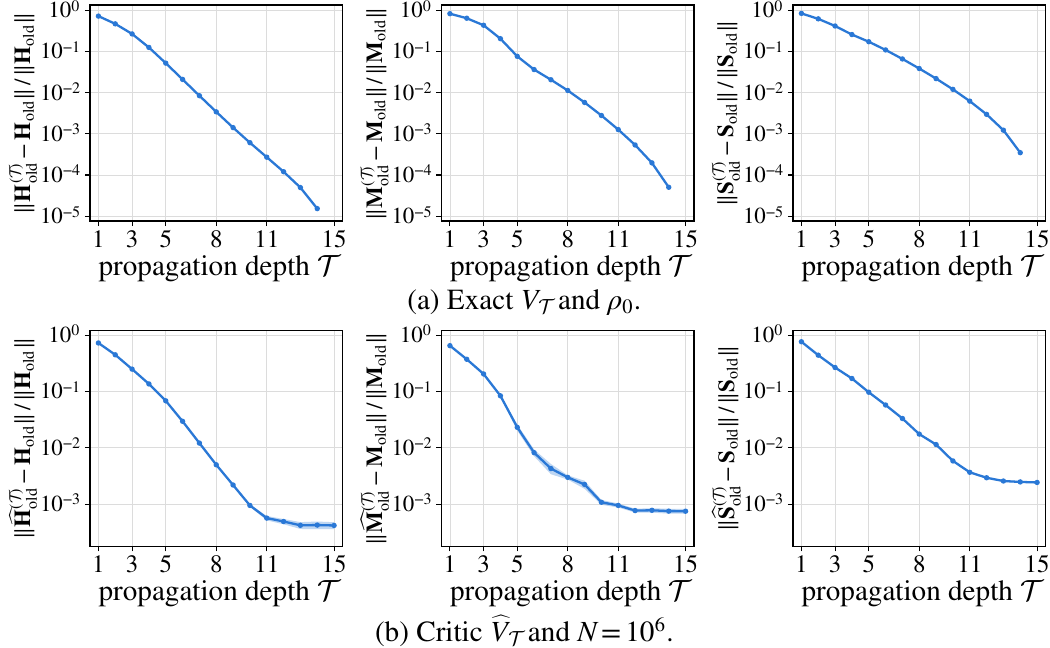}\vspace{-.5em}
    \caption{\small Errors of $\mathbf H$, $\mathbf M$, and $\mathbf S$ versus propagation depth $\mathcal T$.\vspace{-2em}}
    \label{fig:errors}
\end{figure}
\subsection{Sensitivity Estimation Accuracy}
\label{sub:sensitivity-accuracy}
We evaluate the estimates \eqref{eq:method-hessian-estimate}-\eqref{eq:method-mixed-estimate} on the small scale instance at $\kappa_{\mathrm{old}}=0.7$ and compute the sensitivity using \eqref{eq:method-sensitivity-estimate}. Relative errors are computed in the spectral norm, i.e., $\|\widehat{\mathbf H}_{\mathrm{old}}^{(\mathcal T)}-\mathbf H_{\mathrm{old}}\|_2/\|\mathbf H_{\mathrm{old}}\|_2$, and similarly for $\mathbf M_{\mathrm{old}}$ and $\mathbf S_{\mathrm{old}}$.

Fig.~\ref{fig:errors} shows the effect of propagation depth $\mathcal T$. 
In Fig.~\ref{fig:errors}(a), we use the exact boundary value
$V_{\mathcal T}^{\theta^\star(\kappa_{\mathrm{old}}),
\kappa_{\mathrm{old}}}$ and exact expectations under $\rho_0$. This aligns with the ideal setting in Sec.~IV that isolates the truncation error.
The observed reduction with increasing $\mathcal T$ is consistent with retaining more derivative terms, as Theorem~\ref{the:derivative-truncation} characterizes. %
The sensitivity error also decreases, which is consistent with Corollary~\ref{cor:sensitivity-estimation-error}, whose bound tightens as the derivative bounds decrease. %
Fig.~\ref{fig:errors}(b) evaluates the practical setting using
the critic estimates $\widehat V_{\mathcal T}$ and $N=10^6$ sampled initial states (without the ideal setting in Sec.~IV).
The errors decrease initially with $\mathcal T$ and then approach a
plateau near $10^{-3}$.
At intermediate depths $\mathcal T$, both value function approximation and sampling can contribute to the error. At $\mathcal T=H$, the truncation value is zero. The error comes from sampling or numerical computation.

Fig.~\ref{fig:comparison} compares the proposed method at $\mathcal T=3,5,7$ with a trajectory likelihood-ratio estimator~\cite{farquhar2019loaded}. The trajectory likelihood-ratio estimator uses policy likelihood derivatives to estimate $\mathbf H_{\mathrm{old}}$ and additionally uses the defender-policy score with respect to $\kappa$ to estimate $\mathbf M_{\mathrm{old}}$, with $N$ sampled reference trajectories per estimate. In contrast, the proposed method only samples $N$ initial states from $\rho_0$ and computes conditional expectations using the transition kernel and Bellman recursions. Thus, even at the same value of $N$, the proposed
method requires substantially less sampling data.

The proposed estimates are relatively insensitive to $N$ over the tested range and achieve low error even with a modest number of initial-state samples. This is because computing conditional expectations through the transition kernel avoids sampling subsequent actions and transitions.
As $N$ increases, the remaining error in our method is dominated by the finite-depth approximation and the fixed
boundary-value estimate, leading to a relatively flat curve whose level is mainly determined by the propagation depth. Increasing $\mathcal T$ lowers the entire curve by retaining more derivative propagation. Consequently, even at a small propagation depth, the proposed method can achieve competitive estimation accuracy without requiring the large numbers of complete trajectories needed by the trajectory likelihood-ratio estimator to reduce sampling error. At $\mathcal T=3$, the trajectory likelihood-ratio estimator eventually becomes more accurate as $N$ grows, whereas at $\mathcal T=5$ and $7$, the proposed method consistently achieves lower error.

\begin{figure}[t]
    \vspace*{-3.5mm}
    \centering
    \includegraphics[width=\linewidth]{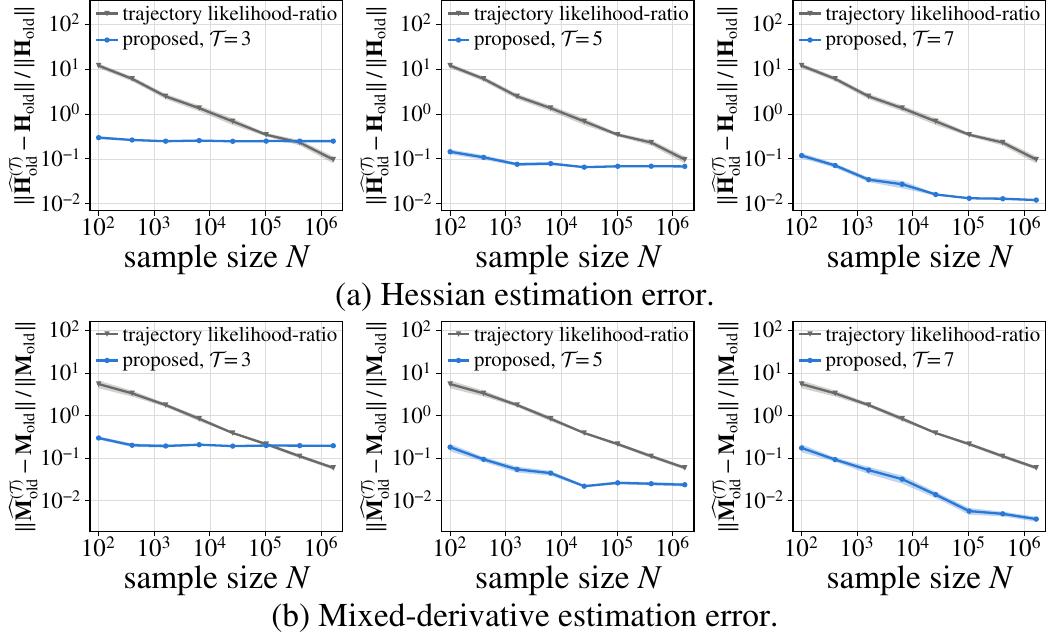}
    \caption{\small Relative estimation errors versus sample size $N$: the proposed method uses $N$ sampled initial states, whereas the trajectory likelihood-ratio estimator uses $N$ complete trajectories. Columns correspond to $\mathcal T=3,5,7$, from left to right. \vspace{-1.5em}}
    \label{fig:comparison}
\end{figure}

\subsection{Adaptation Performance}
\label{sub:adaptation-performance}
We next evaluate whether sensitivity-based initialization improves adaptation beyond directly reusing the reference policy. The large scale instance considers six transfers with $\kappa_{\mathrm{old}}\in\{0.5,0.6,0.7,1.0\}$ and $|\Delta\kappa|\in[0.2,0.6]$, covering both increases and decreases in the defender's behavioral parameter.  We compare sensitivity-based initialization with \emph{direct transfer}, initialized at $\theta^\star(\kappa_{\mathrm{old}})$, and \emph{training from scratch}. The two transfer methods use the same reference policy and identical fine-tuning settings. The only difference is their policy initialization. 

We estimate $\mathbf H_{\mathrm{old}}$ and $\mathbf M_{\mathrm{old}}$ using \eqref{eq:method-estimate} with $\mathcal T=1$ and $N=6000$ per estimate, and compute the corresponding sensitivity through \eqref{eq:method-sensitivity-estimate}. We average five independent sensitivity estimates and use the average in \eqref{eq:method-initialization}.
Each method is trained using PPO in the target environment for $3\times10^5$ steps with ten seeds. Performance is measured by the discounted return in \eqref{eq:objective}, evaluated over $2000$ episodes before training and every $2\times10^4$ training steps thereafter. The learning rate is $3\times10^{-4}$.

Fig.~\ref{fig:adaptation-performance} separates the benefit of reference policy reuse from that of the sensitivity-based initialization.
Both transfer methods perform better than training from scratch
within the fixed budget. 
More importantly, sensitivity-based initialization improves zero-shot mean return over direct transfer by $0.21$-$0.66$ across the six transfers. %
This advantage persists during fine-tuning in several cases and
gradually narrows in others as both methods adapt to the target
environment.
These results show that sensitivity-based initialization with propagation depth $\mathcal T=1$ can provide useful initializations for the tested transfers.
\begin{figure}[t]
    \vspace{-4mm}
    \centering
    \includegraphics[width=.9\linewidth]{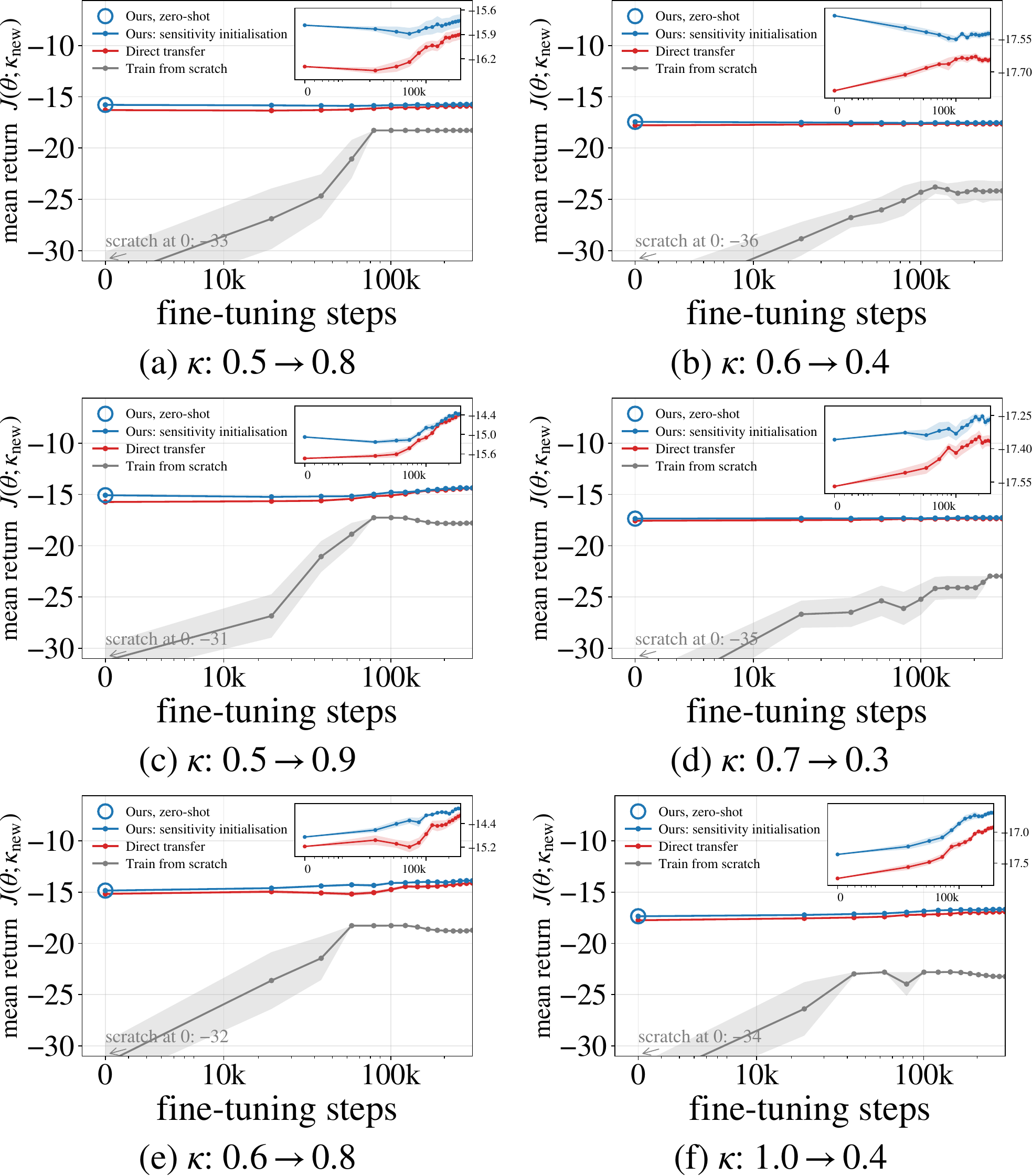}

    \caption{\small Adaptation performance for six changes in the defender's behavioral parameter $\kappa$. The horizontal axis reports target-environment training steps, and the vertical axis reports mean discounted return. %
    Insets compare the two transfer methods on an expanded vertical scale.\vspace{-1.5em}}
    \label{fig:adaptation-performance}
\end{figure}

\section{Conclusion}
This paper developed a finite-depth framework for estimating policy sensitivity to changes in an interactive agent's behavior. The adjustable propagation depth controls the tradeoff between model-based derivative
propagation and approximation accuracy. We theoretically prove that the resulting truncation-error bounds decrease with propagation depth, and validate the proposed method through pursuit--evasion experiments.
\textit{Limitations:} The current analysis assumes access to the reference transition kernel and its derivative with respect to $\kappa$. Moreover, the present method focuses on policy adaptation and truncates value-function derivative propagation by treating the boundary value estimate as parameter-independent.
Future work will relax the model-information requirements, incorporate sampling and value-function approximation errors into the analysis, and estimate the boundary value derivatives to enable joint policy and critic adaptation.

\appendix
\subsection*{Proof of Lemma~\ref{lem:finite-depth}}
\label{app:finite-depth-decomposition}

Since $\rho_0$ is parameter-independent,
$\nabla_\theta J
=\mathbb E_{s_0\sim\rho_0}
[\nabla_\theta V_0^{\theta,\kappa}(s_0)]$.
Unrolling \eqref{eq:method-value-derivative}
from stage $0$ for $\mathcal T$ stages and using
\eqref{eq:state-distribution-propagation} gives

\vspace{-1em}
\begin{small}
\begin{equation}
\label{eq:T-step-gradient}
\begin{aligned}
\nabla_\theta J(\theta;\kappa)
=&
\mathbb E_{s_0\sim\rho_0}
\Bigg[
\sum_{q=0}^{\mathcal T-1}\gamma^q
\int_{\mathcal S}
\mu_q^{\theta,\kappa}(ds\mid s_0)
G_q^{\theta,\kappa}(s)
\\
&+
\gamma^{\mathcal T}
\int_{\mathcal S}
\mu_{\mathcal T}^{\theta,\kappa}(ds\mid s_0)
\nabla_\theta V_{\mathcal T}^{\theta,\kappa}(s)
\Bigg].
\end{aligned}
\end{equation}
\end{small}Differentiating \eqref{eq:T-step-gradient} with respect to $\theta$ and $\kappa$ gives \eqref{eq:T-step-hessian} and \eqref{eq:T-step-mixed}, respectively. The product rule applied to the boundary integral produces one term from the state distribution and one from the value gradient. 
The interchange of differentiation and integration follows from the
regularity condition stated in Section~III-A.

For completeness, the derivative recursions are given below.
For fixed $s_0$,

\vspace{-1em}
{\small
\begin{equation*}
\begin{aligned}
\nabla_\theta&
\mu_{t+1}^{\theta,\kappa}(ds'\!\mid\! s_0)
=\!\int_{\mathcal S}
\nabla_\theta\mu_t^{\theta,\kappa}(ds\!\mid\! s_0)
\!\!\sum_{a_{\mathsf L}\in\mathcal A_{\mathsf L}}
\!\pi_\theta(a_{\mathsf L}\!\mid\! s)
P_\kappa(ds'\!\mid\! s,a_{\mathsf L})
\\
&+
\int_{\mathcal S}
\mu_t^{\theta,\kappa}(ds\mid s_0)
\sum_{a_{\mathsf L}\in\mathcal A_{\mathsf L}}
\nabla_\theta\pi_\theta(a_{\mathsf L}\mid s)
P_\kappa(ds'\mid s,a_{\mathsf L}).
\end{aligned}
\end{equation*}
}

\noindent Differentiating \eqref{eq:state-distribution-propagation} with respect to $\kappa$ gives
\begin{small}
\begin{equation*}
\begin{aligned}
\nabla_\kappa&
\mu_{t+1}^{\theta,\kappa}(ds'\!\mid\! s_0)
=\!\int_{\mathcal S}
\nabla_\kappa\mu_t^{\theta,\kappa}(ds\!\mid\! s_0)
\!\!\sum_{a_{\mathsf L}\in\mathcal A_{\mathsf L}}
\!\pi_\theta(a_{\mathsf L}\!\mid\! s)
P_\kappa(ds'\!\mid\! s,a_{\mathsf L})
\\
&+
\int_{\mathcal S}
\mu_t^{\theta,\kappa}(ds\mid s_0)
\sum_{a_{\mathsf L}\in\mathcal A_{\mathsf L}}
\pi_\theta(a_{\mathsf L}\mid s)
\nabla_\kappa P_\kappa(ds'\mid s,a_{\mathsf L}).
\end{aligned}
\end{equation*}
\end{small}
\noindent Differentiating \eqref{eq:method-local-gradient} gives
\begin{small}
\begin{equation*}
\begin{aligned}
\frac{\partial G_t^{\theta,\kappa}(s)}
{\partial\theta^\top}
&=
\sum_{a_{\mathsf L}\in\mathcal A_{\mathsf L}}
\Bigl[
Q_t^{\theta,\kappa}(s,a_{\mathsf L})
\nabla_{\theta\theta}^2
\pi_\theta(a_{\mathsf L}\mid s)
\\
&\quad+
\nabla_\theta\pi_\theta(a_{\mathsf L}\mid s)
\bigl[
\nabla_\theta Q_t^{\theta,\kappa}(s,a_{\mathsf L})
\bigr]^\top
\Bigr],
\\[2pt]
\frac{\partial G_t^{\theta,\kappa}(s)}
{\partial\kappa^\top}
&=
\sum_{a_{\mathsf L}\in\mathcal A_{\mathsf L}}
\nabla_\theta\pi_\theta(a_{\mathsf L}\mid s)
\bigl[
\nabla_\kappa Q_t^{\theta,\kappa}(s,a_{\mathsf L})
\bigr]^\top.
\end{aligned}
\end{equation*}
\end{small}
Differentiating the Bellman recursions yields
\begin{small}
\begin{align*}
\nabla_\kappa Q_t^{\theta,\kappa}(s,a_{\mathsf L})
=&
\int_{\mathcal S}
\nabla_\kappa P_\kappa(ds'\mid s,a_{\mathsf L})
\Bigl[
r(s,a_{\mathsf L},s')
+\gamma V_{t+1}^{\theta,\kappa}(s')
\Bigr]
\nonumber\\
&+
\gamma\int_{\mathcal S}
P_\kappa(ds'\mid s,a_{\mathsf L})
\nabla_\kappa V_{t+1}^{\theta,\kappa}(s'),
\\
\nabla_\kappa V_t^{\theta,\kappa}(s)
=&
\sum_{a_{\mathsf L}\in\mathcal A_{\mathsf L}}
\pi_\theta(a_{\mathsf L}\mid s)
\nabla_\kappa Q_t^{\theta,\kappa}(s,a_{\mathsf L}),
\\
\nabla_{\theta\theta}^2V_t^{\theta,\kappa}(s)
=&
\frac{\partial G_t^{\theta,\kappa}(s)}
{\partial\theta^\top}
+
\sum_{a_{\mathsf L}\in\mathcal A_{\mathsf L}}
\Bigl[
\pi_\theta(a_{\mathsf L}\mid s)
\nabla_{\theta\theta}^2
Q_t^{\theta,\kappa}(s,a_{\mathsf L})
\\
&+
\nabla_\theta Q_t^{\theta,\kappa}(s,a_{\mathsf L})
\bigl[
\nabla_\theta\pi_\theta(a_{\mathsf L}\mid s)
\bigr]^\top
\Bigr],
\\
\nabla_{\theta\kappa}^2V_t^{\theta,\kappa}(s)
=&
\frac{\partial G_t^{\theta,\kappa}(s)}
{\partial\kappa^\top}
+
\sum_{a_{\mathsf L}\in\mathcal A_{\mathsf L}}
\pi_\theta(a_{\mathsf L}\mid s)
\nabla_{\theta\kappa}^2
Q_t^{\theta,\kappa}(s,a_{\mathsf L}),
\\
\nabla_{\theta\theta}^2
Q_t^{\theta,\kappa}(s,a_{\mathsf L})
=&
\gamma\int_{\mathcal S}
P_\kappa(ds'\mid s,a_{\mathsf L})
\nabla_{\theta\theta}^2
V_{t+1}^{\theta,\kappa}(s'),
\\
\nabla_{\theta\kappa}^2
Q_t^{\theta,\kappa}(s,a_{\mathsf L})
=&
\gamma\int_{\mathcal S}
\nabla_\theta V_{t+1}^{\theta,\kappa}(s')
\bigl[
\nabla_\kappa P_\kappa(ds'\mid s,a_{\mathsf L})
\bigr]^\top
\nonumber\\
&+
\gamma\int_{\mathcal S}
P_\kappa(ds'\mid s,a_{\mathsf L})
\nabla_{\theta\kappa}^2
V_{t+1}^{\theta,\kappa}(s').
\end{align*}
\end{small}

Since $V_H^{\theta,\kappa}\equiv0$, %
all of its first- and second-order parameter derivatives are zero.
For the exact decompositions, the backward recursions start at stage $H$ and supply the values and derivatives required at stage $\mathcal T$ and earlier. Combined with the forward distribution recursions, they determine the quantities in \eqref{eq:T-step-hessian} and \eqref{eq:T-step-mixed}. For the truncated estimates, the backward recursions instead start at stage $\mathcal T$ with $\widehat V_{\mathcal T}$ and zero parameter
derivatives of the boundary value. %

\bibliographystyle{IEEEtran}
\bibliography{root}

\end{document}

%% file: math_commands.tex
\usepackage{amsmath,amsfonts,bm}

\newtheorem{assumption}{Assumption}

\newtheorem{theorem}{Theorem}
\newtheorem{corollary}{Corollary}
\newtheorem{remark}{Remark}

\def\eqref#1{(\ref{#1})}

\def\1{\bm{1}}

\DeclareMathAlphabet{\mathsfit}{\encodingdefault}{\sfdefault}{m}{sl}
\SetMathAlphabet{\mathsfit}{bold}{\encodingdefault}{\sfdefault}{bx}{n}

